\newcommand{\mathsym}[1]{{}}
\newcommand{\unicode}[1]{{}}
\newcommand{\args}{{\mathcal{A}}}
\newcommand{\attacks}{{\mathcal{R}}}
\newcommand{\values}{{\mathcal{V}}}
\newcommand{\attackers}[1]{\attacks^{-}(#1)}
\newtheorem{theorem}{Theorem}
\newtheorem{proposition}[theorem]{Proposition}
\newtheorem{example}{Example}
\title{A note on the uniqueness of models in social abstract argumentation}
\author{Leila Amgoud, Elise Bonzon, Marco Correia, Jorge Cruz, J\'er\^ome Delobelle, \\ S\'ebastien Konieczny, Jo\~ao Leite, Alexis Martin, Nicolas Maudet, Srdjan Vesic}
\begin{document}
\maketitle

\begin{abstract}
Social abstract argumentation is a principled way to assign values to conflicting (weighted) arguments. In this note we discuss the important property of the uniqueness of the model. 
\end{abstract}

\section{Introduction}
In the presence of several conflicting arguments, as is the case for instance in online debate platforms, it is difficult to appreciate which ones are ``stronger'' or more convincingly defended. This is even more challenging when those arguments can have weights (for instance, providing from votes of the users). 
To this date, social abstract argumentation \cite{LM11} remains one of the only proposal embracing both aspects of the structural interaction among arguments (attacks), and votes on these arguments. Furthermore, the model has been the subject of an in-depth algorithmic study and experimental evaluation \cite{CCL14}, showing that instances of significant size can be handled. 


Formally, a \emph{social abstract argumentation framework} is defined as a triple $\langle \args , \attacks, \values \rangle$, where $\args$ stands for a set of arguments, $\attacks \subseteq \args \times \args$ for a set of attacks among arguments, and $\values : \args \mapsto \mathbb{N} \times \mathbb{N}$ a function mapping each arguments to a tuple of \emph{pro} and \emph{con} votes ($v^+$ and $v^-$). We denote by $ \attackers{a}$ the set of attacker(s) of an argument $a$. 

Given a totally ordered set $L$ containing all possible valuations of an argument, with top ($\top$) and bottom ($\bot$) elements, 
finding a \emph{social model} consists in finding a mapping  $M_{S}: \args \rightarrow L$ of values to arguments satisfying certain constraints induced by the argumentation framework.  That is, for all $a  \in \args $ it must be the case that: 

\begin{empheq}[]{align}
\label{eq:social-model}
		M_{S}(a) = \tau(a) \curlywedge \neg \curlyvee\{M(a_{i}) : a_{i} \in \attackers{a}\}  
\end{empheq}
		where:
		$\tau : \args \rightarrow L$ is the social support; 
		$\curlywedge : L \times L \rightarrow L$ combines the initial score with the score of direct attackers; 
		$\curlyvee : L \times L \rightarrow L$ aggregates the score of direct attackers; 	
		and $\neg : L \rightarrow L$ restricts the value of the attacked argument. 
For the semantics to be \emph{well-behaved}, it is required that $\curlywedge$ is continuous, commutative, associative, monotonic w.r.t. both values and $\top$ is its identity element; that $\curlyvee$ is continuous, commutative, associative, monotonic w.r.t. both values and $\bot$ is its identity element; and that $\neg$ is antimonotonic, continuous, $\neg \bot = \top$, $\neg \top = \bot$ and $\neg \neg a = a$. $\tau$ is monotonic w.r.t. the first value and anti-monotonic w.r.t the second value. 
Most importantly, well-behaved semantics induce at least one social model \cite[Theorem 12]{LM11}.

A very important property that the designer may require is the \emph{uniqueness of the model}, that is, the fact that there exists only one social model satisfying Equation (\ref{eq:social-model}). 
One natural well-behaved social semantics satisfying this under some condition is the {\it simple product semantics}, defined as 
$\emph{SP}_{\epsilon} = \langle [0,1], \tau_{\epsilon},\curlywedge,\curlyvee, \neg \rangle$ where $\tau_{\epsilon} = \frac{v^+}{v^+ + v^- + \epsilon}$ (with $\epsilon > 0$, and $\tau_{\epsilon} = 0$ when there is no votes), $x_{1} \curlywedge x_{2} = x_{1} \times x_{2}$ (Product T-Norm),  $x_{1} \curlyvee x_{2} = x_{1} + x_{2} - x_{1} \times x_{2}$ (Probabilistic Sum T-CoNorm) and $\neg x_{1} = 1 - x_{1}$.

Indeed, when $|\attackers{a}| \times \tau(a) < 1$, for all arguments $a$ in the system, the uniqueness of the model can be proven \cite[Theorem 13]{LM11}. 
It is also conjectured \cite[Conjecture 14]{LM11} that this simple product models enjoy this property for social abstract argumentation frameworks \emph{in general}. In this note we show that while the conjecture holds when there are 3 arguments, it does not from 4 arguments onwards (Section \ref{sec:3arguments}). 
We briefly discuss the consequences of this finding in Section \ref{sec:discussion}.  
\section{Uniqueness of models only holds up to 3 arguments}
\label{sec:3arguments}

\begin{proposition}
In any social abstract argumentation frameworks $\langle \args , \attacks, \values \rangle$ with $|\args| \leq 3$, there exists a unique social model. 
\end{proposition}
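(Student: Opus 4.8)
The statement concerns the simple product semantics $\emph{SP}_{\epsilon}$ (as in \cite[Conjecture 14]{LM11}), for which a social model is exactly a fixed point of the operator $F:[0,1]^{\args}\to[0,1]^{\args}$ with $F(M)(a)=\tau_{\epsilon}(a)\prod_{b\in\attackers{a}}(1-M(b))$ --- the probabilistic-sum co-norm followed by $\neg$ collapses to this product via $\neg\curlyvee\{x_i\}=\prod_i(1-x_i)$. Existence is already granted by \cite[Theorem 12]{LM11}, so only uniqueness is at stake, and the one structural fact I would use throughout is that $\tau_{\epsilon}(a)=\frac{v^+}{v^++v^-+\epsilon}<1$ \emph{strictly} for every argument (with $\tau_{\epsilon}(a)=0$ when $a$ has no vote), precisely because $\epsilon>0$; this strictness is what keeps the relevant maps contractive enough on small instances, and --- consistently with the rest of the note --- is exactly what ceases to suffice once $|\args|\ge 4$.

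The first step is to reduce to strongly connected instances. An argument whose attackers are all already settled has its value forced, so it may be deleted while each factor $(1-M(a))$ it contributes is replaced by the constant $1-\tau_{\epsilon}(a)\in(0,1]$; iterating, one processes the strongly connected components of $\langle\args,\attacks\rangle$ in topological order, and inside a component the already-computed upstream arguments only rescale the supports by factors in $[0,1)$. Hence every acyclic framework is done by topological induction, and with $|\args|\le 3$ it remains to treat the finitely many strongly connected attack digraphs on $1$, $2$ or $3$ vertices, now with effective supports $\tau'(a)\in[0,1)$ (the sparsest of these are in any case covered by \cite[Theorem 13]{LM11}).

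For each such shape the plan is to eliminate variables down to a single polynomial equation $\varphi(x)=0$ in one coordinate $x=M(a)\in[0,1]$ and show it has a unique root there. A vertex with only a self-loop gives the linear $x=\tau'(a)(1-x)$; a $2$-cycle (with or without self-loops) reduces after one substitution to a linear equation whose coefficient is $1$ minus a product of numbers $<1$, hence nonzero. For the $3$-vertex components the per-vertex maps are of only two kinds --- affine with slope of absolute value $<1$ (a plain incoming cycle edge, contributing the factor $\tau'(\cdot)$), or $s\mapsto\frac{\tau'(c)\,s}{1+\tau'(c)\,s}$ (a self-loop, which is a contraction since $\tau'(c)<1$) --- so composing and back-substituting yields $\varphi$ of low degree with $\varphi(0)\le 0<\varphi(1)$ (the latter because $\varphi(1)\ge 1-\tau'(a)>0$); at least one root in $(0,1)$ then follows (as \cite[Theorem 12]{LM11} anyway guarantees), and it remains to show $\varphi$ is strictly monotone --- or at worst decreasing-then-increasing --- on $[0,1]$, which pins the root.

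The delicate part is precisely this last shape analysis for the \emph{dense} $3$-vertex components, where a crude product-rule bound on $\varphi'$ comes out like $2\tau'(a)$ and need not be below $1$. When two (or all three) of the arguments share the same attacker set --- which happens for the complete digraph with self-loops and for two $2$-cycles sharing a vertex --- dividing the corresponding equations shows their values are proportional to their supports, so the system collapses; e.g. in the fully symmetric case it becomes $t=\prod_{x}(1-\tau'(x)\,t)$ with $t\in[0,1]$, whose right-hand side is strictly decreasing (each factor is, since $\tau'(x)\,t<1$) from $1$ down to a positive value, meeting the diagonal exactly once. The genuine obstacle is the few remaining shapes in which no two arguments share an attacker set --- chiefly the complete digraph on three vertices without self-loops --- where, with distinct supports, this collapse is unavailable; there I would change variables to $k=\frac{M(x)(1-M(x))}{\tau'(x)}$ (a common value across $x$, and moreover $\prod_x(1-M(x))=k$), reducing the system to $k=\frac{1}{8}\prod_x(1-\eta_x\sqrt{1-4\tau'(x)k})$ for a sign pattern $\eta\in\{\pm1\}^3$, and then check --- using $\tau'(x)<1$ and the admissibility window forced by $0\le M(x)\le\tau'(x)$ --- that exactly one sign pattern yields a (unique) root in its domain. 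I expect that bookkeeping, together with verifying that the moves above really do exhaust all strongly connected digraphs on at most three vertices, to be where all the work is: no individual case is hard, but being exhaustive is.
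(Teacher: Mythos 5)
Your reduction rests on the same key identity as the paper's proof: multiplying the $i$-th equation by $(1-x_i)$ shows that $x_i(1-x_i)/\tau'(x_i)$ is a common value $k=\prod_j(1-x_j)$, and the paper likewise derives $x_1(1-x_1)/a_1=x_2(1-x_2)/a_2$ before eliminating variables. Your organizational frame (processing strongly connected components in topological order, enumerating the strongly connected shapes on at most three vertices, the collapse when attacker sets coincide) is sound and in fact more systematic than the paper's ``the other cases are similar.'' The problem is that you stop exactly where the real work begins. For the complete loop-free digraph on three vertices --- which you correctly single out as the genuine obstacle, and which is the only case the paper actually works out --- you reduce to $k=\frac{1}{8}\prod_x\bigl(1-\eta_x\sqrt{1-4\tau'(x)k}\bigr)$ and then assert that one would ``check that exactly one sign pattern yields a (unique) root in its domain.'' That check is the entire content of the proposition, and it is not routine. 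First, the window $0\le M(x)\le\tau'(x)$ does not by itself kill all but one sign pattern: one can show (as the paper does) that the two arguments with the smaller supports must have value below $\frac12$, which fixes their signs, but the argument with the largest support can lie on either side of $\frac12$, so two patterns, $(+,-,-)$ and $(-,-,-)$, both remain admissible and you must show they contribute exactly one positive root between them. Second, the $(+,-,-)$ equation always admits the spurious root $k=0$ (corresponding to some $M(x)=1$, excluded since $M(x)\le\tau'(x)<1$), and its right-hand side is a product of one increasing and two decreasing factors of $k$, hence not obviously monotone or convex; the $(-,-,-)$ side is genuinely decreasing, but ruling out that both patterns simultaneously carry a positive root still requires an argument.

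For comparison, the paper sidesteps the sign ambiguity by eliminating $x_2,x_3$ (whose signs are determined) in favour of $x_1$ itself, obtaining a single fixed-point equation $x_1=f(x_1)$ on $(0,1)$; it then computes $f''$ explicitly, proves term by term that $f''>0$ so that $f$ is strictly convex with $f(0)=f(1)=a_1<1$, and uses the chord from the smallest fixed point to the point $(1,a_1)$ to exclude any second fixed point. Some argument of comparable substance --- a monotonicity or convexity estimate for your $k$-equations together with the bookkeeping across the two surviving sign patterns --- is indispensable; until it is supplied, the proposal is a plausible programme rather than a proof.
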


\begin{proof}
We detail the case where $A= \{a_1,a_2,a_3\}$ and $R = A \times A$. The other cases are similar and can be derived along the same lines. 
We will prove that with \(0<a_1,a_2,a_3<1\), there is a unique solution solution in $(0,1)^3$ to the system:\\
\begin{empheq}[left=\empheqlbrace]{align}
x_1 = a_1 \left(1-x_2\right) \left(1-x_3\right)  \label{eq:1} \\ 
x_2 = a_2 \left(1-x_1\right) \left(1-x_3\right)  \label{eq:2} \\
x_3 = a_3 \left(1-x_1\right) \left(1-x_2\right) 
\end{empheq}




Let us consider the case where all \(a\){'}s are different, without loss of generality $a_3<a_2<a_1$ \\
Multiplying (\ref{eq:1}) and (\ref{eq:2}) by \(\left(1-x_1\right)\) and \(\left(1-x_2\right)\) respectively, we obtain:\\

\begin{empheq}[left=\empheqlbrace]{align*}
x_1\left(1-x_1\right) = a_1 \left(1-x_1\right)\left(1-x_2\right) \left(1-x_3\right) \\
x_2\left(1-x_2\right) = a_2 \left(1-x_1\right) \left(1-x_2\right)\left(1-x_3\right)
\end{empheq}

therefore
\[
\frac{x_1\left(1-x_1\right)}{a_1}=\frac{x_2\left(1-x_2\right)}{a_2}
\]
since \(x(1-x)=\)\(\frac{1}{4}-\left(\frac{1}{2}-x\right)^2\) we can rewrite the equation as:
\begin{empheq}[]{align}
\frac{a_2}{a_1}\left(\frac{1}{4}-\left(\frac{1}{2}-x_1\right){}^2\right)=\left(\frac{1}{4}-\left(\frac{1}{2}-x_2\right){}^2\right) \label{eq:3}
\end{empheq}
Since we assume that \(a_2<a_1\) it must be the case that:\\
\begin{empheq}{align*}
\frac{1}{4}-\left(\frac{1}{2}-x_1\right){}^2 & > \frac{1}{4}-\left(\frac{1}{2}-x_2\right){}^2 \\
\left(\frac{1}{2}-x_2\right){}^2 &>  \left(\frac{1}{2}-x_1\right){}^2\\
\sqrt{\left(\frac{1}{2}-x_2\right){}^2} &> \sqrt{\left(\frac{1}{2}-x_1\right){}^2} \\
\left|\frac{1}{2}-x_2\right| &>  \left|\frac{1}{2}-x_1\right| (\text{because } \sqrt{x^2}=|x|)
\end{empheq}

\noindent Let us know prove that \(x_2<\frac{1}{2}\). There are two cases for $x_1$: 
\begin{itemize}
\item Case \(x_1<\frac{1}{2}\). 
Suppose for contradiction that \(x_2\geq \frac{1}{2}\), therefore we must have 
 \(\left|\frac{1}{2}-x_2\right|>\left|\frac{1}{2}-x_1\right|\), thus \(x_2-\frac{1}{2}>\frac{1}{2}-x_1\), and finally \(x_2>1-x_1\). But this is impossible because \(x_2=a_2 \left(1-x_1\right) \left(1-x_3\right)\)\\
$\quad $Therefore, in this case \(x_2<\frac{1}{2}\) \\
\item Case \(x_1\geq \frac{1}{2}\). 
In that case we have \(-x_1\leq -\frac{1}{2}\), and thus 
\(1-x_1\leq \frac{1}{2}\). This means that \(x_2<\frac{1}{2}\) because \(x_2=a_2 \left(1-x_1\right) \left(1-x_3\right)\)\\
\end{itemize}
Therefore, in every case we have that \(x_2<\frac{1}{2}\)\\
Solving for \(x_2\) the equation (\ref{eq:3}) gives: 

\begin{empheq}{align*}
\left(\frac{1}{2}-x_2\right){}^2 & =\frac{1}{4}-\frac{a_2}{a_1}\left(\frac{1}{4}-\left(\frac{1}{2}-x_1\right){}^2\right)\\
x_2 & =\frac{1}{2}\pm \sqrt{\frac{1}{4}-\frac{a_2}{a_1}\left(\frac{1}{4}-\left(\frac{1}{2}-x_1\right){}^2\right)}
\end{empheq}
and since we know that \(x_2<\frac{1}{2}\) we have:
\begin{empheq}{align*}
x_2=\frac{1}{2}-\sqrt{\frac{1}{4}-\frac{a_2}{a_1}\left(\frac{1}{4}-\left(\frac{1}{2}-x_1\right){}^2\right)}
\end{empheq}
A similar reasoning can be used for $x_3$, 
therefore, the first equation of the initial system can be rewritten as:\\

\begin{empheq}{align*}
x_1 & =a_1 \left(1-\left(\frac{1}{2}-\sqrt{\frac{1}{4}-\frac{a_2}{a_1}\left(\frac{1}{4}-\left(\frac{1}{2}-x_1\right){}^2\right)}\right)\right)
\left(1-\left(\frac{1}{2}-\sqrt{\frac{1}{4}-\frac{a_3}{a_1}\left(\frac{1}{4}-\left(\frac{1}{2}-x_1\right){}^2\right)}\right)\right)\\
 & =a_1 \left(\frac{1}{2}+\sqrt{\frac{1}{4}-\frac{a_2}{a_1}\left(\frac{1}{4}-\left(\frac{1}{2}-x_1\right){}^2\right)}\right) \left(\frac{1}{2}+\sqrt{\frac{1}{4}-\frac{a_3}{a_1}\left(\frac{1}{4}-\left(\frac{1}{2}-x_1\right){}^2\right)}\right)
\end{empheq}
Let $f$ be the real variable function defined over \((0,1)\):
\begin{empheq}{align*}
f\left(x_1\right)=a_1 \left(\frac{1}{2}+\sqrt{\frac{1}{4}-\frac{a_2}{a_1}\left(\frac{1}{4}-\left(\frac{1}{2}-x_1\right){}^2\right)}\right)
\left(\frac{1}{2}+\sqrt{\frac{1}{4}-\frac{a_3}{a_1}\left(\frac{1}{4}-\left(\frac{1}{2}-x_1\right){}^2\right)}\right)
\end{empheq}
Any real value \(r_1\) with \(0<r_1<1\) that satisfies the equation \(r_1=f\left(r_1\right)\) determines one solution of the system:
\begin{empheq}{align*}
\left\langle x_1,x_2,x_3\right\rangle =\left\langle r_1,\frac{1}{2}-\sqrt{\frac{1}{4}-\frac{a_2}{a_1}\left(\frac{1}{4}-\left(\frac{1}{2}-r_1\right){}^2\right)},\frac{1}{2}-\sqrt{\frac{1}{4}-\frac{a_3}{a_1}\left(\frac{1}{4}-\left(\frac{1}{2}-r_1\right){}^2\right)}\right\rangle
\end{empheq}

Let us now prove that there exists one and only one value for \(0<r_1<1\) that satisfies equation \(r_1=f\left(r_1\right)\). 
\(f\) is a continuous function defined over \([0,1]\) (its roots always have positive values in this interval). 
The derivative $f'\left(x_1\right)$ of the function is:
\begin{empheq}{align*}
-\frac{a_2 \left(\frac{1}{2}+\sqrt{\frac{1}{4}-\frac{a_3}{a_1}\left(\frac{1}{4}-\left(\frac{1}{2}-x_1\right){}^2\right)}\right)
\left(\frac{1}{2}-x_1\right)}{\sqrt{\frac{1}{4}-\frac{a_2}{a_1}\left(\frac{1}{4}-\left(\frac{1}{2}-x_1\right){}^2\right)}}-\frac{a_3 \left(\frac{1}{2}+\sqrt{\frac{1}{4}-\frac{a_2}{a_1}\left(\frac{1}{4}-\left(\frac{1}{2}-x_1\right){}^2\right)}\right)
\left(\frac{1}{2}-x_1\right)}{\sqrt{\frac{1}{4}-\frac{a_3}{a_1}\left(\frac{1}{4}-\left(\frac{1}{2}-x_1\right){}^2\right)}}
\end{empheq}
\begin{itemize}
\item if $x_1<\frac{1}{2}$ both terms are negative and $f'\left(x_1\right)<0$ ($f$ is decreasing in $\left.\left[0,\frac{1}{2}\right.\right)$ 
\item if $x_1=\frac{1}{2}$ both terms are zero and $f'\left(x_1\right)=0$($f \left(\frac{1}{2}\right)$ is smaller value $f$ in $[0,1]$) 
\item if \(x_1>\frac{1}{2}\) both terms are positive and \(f'\left(x_1\right)>0\) (\(f\) is increasing in \(\left.\left(\frac{1}{2},1\right.\right]\))
\end{itemize}
Evaluating the function in its significant points, we get that: 

\begin{itemize}
\item $f(0)=a_1$
\item $f\left(\frac{1}{2}\right)=a_1 \left(\frac{1}{2}+\frac{1}{2}\sqrt{1-\frac{a_2}{a_1}}\right) \left(\frac{1}{2}+\frac{1}{2}\sqrt{1-\frac{a_3}{a_1}}\right)$
\item $f(1)=a_1$
\end{itemize}

The second derivative of the function is:\\

\begin{empheq}{align*}
f\text{''}\left(x_1\right) & =\frac{a_2 \left(\frac{1}{2}+\sqrt{\frac{1}{4}-\frac{a_3}{a_1}\left(\frac{1}{4}-\left(\frac{1}{2}-x_1\right){}^2\right)}\right)}{\sqrt{\frac{1}{4}-\frac{a_2}{a_1}\left(\frac{1}{4}-\left(\frac{1}{2}-x_1\right){}^2\right)}} 
- \frac{a_2{}^2
\left(\frac{1}{2}+\sqrt{\frac{1}{4}-\frac{a_3}{a_1}\left(\frac{1}{4}-\left(\frac{1}{2}-x_1\right){}^2\right)}\right) \left(\frac{1}{2}-x_1\right){}^2}{a_1
\left(\frac{1}{4}-\frac{a_2}{a_1}\left(\frac{1}{4}-\left(\frac{1}{2}-x_1\right){}^2\right)\right){}^{3/2}} \\
 & +\frac{a_3
\left(\frac{1}{2}+\sqrt{\frac{1}{4}-\frac{a_2}{a_1}\left(\frac{1}{4}-\left(\frac{1}{2}-x_1\right){}^2\right)}\right)}{\sqrt{\frac{1}{4}-\frac{a_3}{a_1}\left(\frac{1}{4}-\left(\frac{1}{2}-x_1\right){}^2\right)}} 
-\frac{a_3{}^2 \left(\frac{1}{2}+\sqrt{\frac{1}{4}-\frac{a_2}{a_1}\left(\frac{1}{4}-\left(\frac{1}{2}-x_1\right){}^2\right)}\right)
\left(\frac{1}{2}-x_1\right){}^2}{a_1 \left(\frac{1}{4}-\frac{a_3}{a_1}\left(\frac{1}{4}-\left(\frac{1}{2}-x_1\right){}^2\right)\right){}^{3/2}} \\
& + \frac{2
a_2 a_3 \left(\frac{1}{2} - x_1\right){}^2}{a_1 \sqrt{\frac{1}{4}-\frac{a_2}{a_1}\left(\frac{1}{4}-\left(\frac{1}{2}-x_1\right){}^2\right)} \sqrt{\frac{1}{4}-\frac{a_3}{a_1}\left(\frac{1}{4}-\left(\frac{1}{2}-x_1\right){}^2\right)}}
\end{empheq}

\begin{empheq}{align}
f\text{''}\left(x_1\right) & =\frac{a_2 \left(\frac{1}{2}+\sqrt{\frac{1}{4}-\frac{a_3}{a_1}\left(\frac{1}{4}-\left(\frac{1}{2}-x_1\right){}^2\right)}\right)}{\sqrt{\frac{1}{4}-\frac{a_2}{a_1}\left(\frac{1}{4}-\left(\frac{1}{2}-x_1\right){}^2\right)}}\left(1-\frac{a_2\text{
 }\left(\frac{1}{2}-x_1\right){}^2}{a_1 \left(\frac{1}{4}-\frac{a_2}{a_1}\left(\frac{1}{4}-\left(\frac{1}{2}-x_1\right){}^2\right)\right)}\right) \\
& + \frac{a_3
\left(\frac{1}{2}+\sqrt{\frac{1}{4}-\frac{a_2}{a_1}\left(\frac{1}{4}-\left(\frac{1}{2}-x_1\right){}^2\right)}\right)}{\sqrt{\frac{1}{4}-\frac{a_3}{a_1}\left(\frac{1}{4}-\left(\frac{1}{2}-x_1\right){}^2\right)}}
\left(1-\frac{a_3\text{
 }\left(\frac{1}{2}-x_1\right){}^2}{a_1 \left(\frac{1}{4}-\frac{a_3}{a_1}\left(\frac{1}{4}-\left(\frac{1}{2}-x_1\right){}^2\right)\right)}\right)\\
 & + \frac{2
a_2 a_3 \left(\frac{1}{2}-x_1\right){}^2}{a_1 \sqrt{\frac{1}{4}-\frac{a_2}{a_1}\left(\frac{1}{4}-\left(\frac{1}{2}-x_1\right){}^2\right)} \sqrt{\frac{1}{4}-\frac{a_3}{a_1}\left(\frac{1}{4}-\left(\frac{1}{2}-x_1\right){}^2\right)}} \label{eq:4}
\end{empheq}

Let us now inspect the terms of this expression. 
The last term (\ref{eq:4}) is clearly positive. 
The first two are also positive if \(\left(1-\frac{a\text{  }\left(\frac{1}{2}-x_1\right){}^2}{a_1 \left(\frac{1}{4}-\frac{a}{a_1}\left(\frac{1}{4}-\left(\frac{1}{2}-x_1\right){}^2\right)\right)}\right)>0\)
for any value of \(0<a<a_1\).\\
Let us consider its derivative. 
\begin{empheq}{align*}
\left(1-\frac{a\text{  }\left(\frac{1}{2}-x_1\right){}^2}{a_1 \left(\frac{1}{4}-\frac{a}{a_1}\left(\frac{1}{4}-\left(\frac{1}{2}-x_1\right){}^2\right)\right)}\right)'=\frac{2
a \left(\frac{1}{2}-x_1\right)}{a_1 \left(\frac{1}{4}-\frac{a \left(\frac{1}{4}-\left(\frac{1}{2}-x_1\right){}^2\right)}{a_1}\right)}\left(1-\frac{
a \left(\frac{1}{2}-x_1\right){}^2}{a_1 \left(\frac{1}{4}-\frac{a \left(\frac{1}{4}-\left(\frac{1}{2}-x_1\right){}^2\right)}{a_1}\right)}\right)
\end{empheq}

Whose only zero is when \(x_1=\frac{1}{2}\) (it can be checked that the second term of the product cannot be equal to 0, because $a<a_1$). 
For the interval \(\left.\left[0,\frac{1}{2}\right.\right)\) the derivative has values:\\
\begin{empheq}{align*}
\left(1-\frac{a\text{  }\left(\frac{1}{2}-0\right)^2}{a_1 \left(\frac{1}{4}-\frac{a}{a_1}\left(\frac{1}{4}-\left(\frac{1}{2}-0\right)^2\right)\right)}\right)'& =
\frac{a }{a_1 \left(\frac{1}{4}-0\right)}\left(1-\frac{ a \frac{1}{4}}{a_1 \left(\frac{1}{4}-0\right)}\right)=\frac{ 4a }{a_1 }\left(1-\frac{ a }{a_1}\right)>0
\end{empheq}

For the interval \(\left.\left(\frac{1}{2},1\right.\right]\) the derivative has values:
\begin{empheq}{align*}
\left(1-\frac{a\text{  }\left(\frac{1}{2}-1\right)^2}{a_1 \left(\frac{1}{4}-\frac{a}{a_1}\left(\frac{1}{4}-\left(\frac{1}{2}-1\right)^2\right)\right)}\right)' & =\frac{-a }{a_1 \left(\frac{1}{4}-0\right)}\left(1-\frac{ a \frac{1}{4}}{a_1 \left(\frac{1}{4}-0\right)}\right)=\frac{- 4a }{a_1 }\left(1-\frac{ a }{a_1}\right)<0
\end{empheq}
Therefore, the term \(\left(1-\frac{a\text{  }\left(\frac{1}{2}-x_1\right){}^2}{a_1 \left(\frac{1}{4}-\frac{a}{a_1}\left(\frac{1}{4}-\left(\frac{1}{2}-x_1\right){}^2\right)\right)}\right)\)
increases in \(\left.\left[0,\frac{1}{2}\right.\right)\) and decreases in \(\left.\left(\frac{1}{2},1\right.\right]\). Its least value in the interval is with \(x_1=0\) (with \(x_1=1\) the value is equal):\\
$\quad $\(\left(1-\frac{a\text{  }\left(\frac{1}{2}-x_1\right){}^2}{a_1 \left(\frac{1}{4}-\frac{a}{a_1}\left(\frac{1}{4}-\left(\frac{1}{2}-x_1\right){}^2\right)\right)}\right)>1-\frac{a\text{
 }\left(\frac{1}{2}-0\right)^2}{a_1 \left(\frac{1}{4}-\frac{a}{a_1}\left(\frac{1}{4}-\left(\frac{1}{2}-0\right)^2\right)\right)}=1-\frac{a \frac{1}{4}}{a_1
\left(\frac{1}{4}\right)}=1-\frac{a }{a_1 }>0\)\\
Therefore, every term of the second derivative are positive and therefore the second derivative is always positive in the interval.\\
\\
Summing up, in the interval \([0,1]\) we have:\\
\[
\begin{array}{c|ccccc}
  x_1 & [0] & (0,\frac{1}{2}) & \left[\frac{1}{2}\right] &
 \left(\frac{1}{2},1\right) & [1] \\
  \hline
f(x_1) & a_1 & \searrow & f(\frac{1}{2}) & \nearrow & a_1 \\  
f'(x_1) & - & - & 0 & + & + \\
f''(x_1) & + & + & + & + & + \\
\end{array}
\]

We will first prove that in \((0,1)\) there exists at least one solution to the equation \(x_1=f\left(x_1\right)\). { } \\
Note that any solution to the equation in \([0,1]\) corresponds to an intersection point between the straight line \(y=x\) and the curve
$y=f(x)$. \\
Both functions are continuous and are defined over the entire interval \([0,1]\). At one end of the interval, with \(x=0\),
the value of \(y=0\) that satisfies the equation of the straight line is inferior to the value of \(y=f(0)=a_1\) that satisfies the equation of the curve (the straight line is below the curve at the point \(x=0\)). At the other end of the interval, with \(x=1\), the value of \(y=1\) that satisfies the equation of the straight line is superior to the value of \(y=f(1)=a_1\) that satisfies the equation of the curve (the straight line is above the curve at the point \(x=1\)).
Since both the straight line and the curve are continuous, there has to be at least one point in which they cross, i.e.: 
$\quad $\(\exists _{r\in (0,1)}r=f(r)\)\\
\\
We shall now prove that in \((0,1)\) there exists at most one solution to the equation \(x_1=f\left(x_1\right)\).$\quad $\\
Note that the function \(f\) is strictly convex in \([0,1]\) since its second derivative is always positive in all points of the interval. 
Being convex in \([0,1]\) we know that:
\begin{itemize}
\item $\forall _{x_1\neq x_2\in [0,1]}\forall _{t\in (0,1)}f\left(\text{tx}_2+(1-t)x_1\right)<\text{tf}\left(x_2\right)+(1-t)f\left(x_1\right)$
\item $\forall _{x_1\neq x_2\in [0,1]}\forall _{t\in (0,1)}f\left(x_1+t\left(x_2-x_1\right)\right)<f\left(x_1\right)+t\left(f\left(x_2\right)-f\left(x_1\right)\right)$
\end{itemize}
Informally, the previous inequality establishes that the value of the function in any point between \(x_1\) and \(x_2\) is strictly smaller than the value of the point in the straight line conecting \(x_1\) to \(x_2\). \\
Let { }\(x_1=r_1\) be the smaller value of{ }\(x_1\) that satisfies the equation \(r_1=f\left(r_1\right)\) (we already established that there exists at least one). 
If \(r_1\) is the smaller value that satisfies the equation, then there are no additional solutions with values of \(x_1\in
\left(0,r_1\right)\). Let \(x_2=1\), through the inequality above we have:
\begin{empheq}{align*}
\forall_{t\in (0,1)}f\left(r_1+t\left(1-r_1\right)\right) & <f\left(r_1\right)+t\left(f(1)-f\left(r_1\right)\right) & \\
& < f\left(r_1\right)+t\left(a_1-f\left(r_1\right)\right) &  (\text{because } f(1)=a_1)\\
& <r_1+t\left(a_1-r_1\right) &  (\text{because } f\left(r_1\right)=r_1)\\
& < r_1+t\left(1-r_1\right) & (\text{because } a_1<1)
\end{empheq}

Through the previous inequality we know that no value of \(x_1\in \left(r_1,1\right)\) can satisfy equation \(x_1=f\left(x_1\right)\). 
Therefore, \(r_1\) is the only solution in \((0,1)\), so there exists one and only one solution to equation \(x_1=f\left(x_1\right)\) in this interval. 
Consequently, there is also one and only one solution of the original system in \((0,1)^3\). 
\end{proof}


The question is thus whether the same property holds for a larger number of arguments, as conjectured in \cite{LM11}. It turns out that, from 4 arguments, several social models may exist. 


\begin{example}[Non-uniqueness of models]
The example involves four arguments involved in pairwise reciprocal attacks, as illustrated (see Figure \ref{figNonUniquenessFourArgs}). 
\begin{figure}[h]
\begin{center}
\begin{tikzpicture}[scale=1]
\tikzstyle{arg-out}=[circle,draw, line width=1pt]

\node (argA) at (0,2.5) [arg-out,label=above left:\footnotesize{$(1,0)$}, label=below:\footnotesize{}] {$a$};
\node (argB) at (1.5,2.5) [arg-out,label=above right:\footnotesize{$(1,0)$}, label=below:\footnotesize{}] {$b$};
\node (argD) at (0,1) [arg-out,label=below left:\footnotesize{$(1,0)$}, label=below:\footnotesize{}] {$d$};
\node (argC) at (1.5,1) [arg-out,label=below right:\footnotesize{$(1,0)$}, label=below:\footnotesize{}] {$c$};

\draw [->] (argA) to [bend left=20] (argB);
\draw [->] (argB) to [bend left=20] (argA);
\draw [->] (argB) to [bend left=20] (argC);
\draw [->] (argC) to [bend left=20] (argB);
\draw [->] (argC) to [bend left=20] (argD);
\draw [->] (argD) to [bend left=20] (argC);
\draw [->] (argD) to [bend left=20] (argA);
\draw [->] (argA) to [bend left=20] (argD);

 
 \end{tikzpicture}
\end{center}
\caption{\label{figNonUniquenessFourArgs} Example of AF with multiple valid models}
\end{figure}
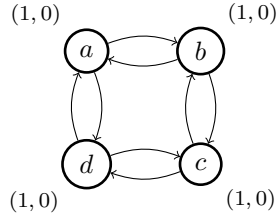
~\\
To compute the score of each argument, we will use the {\it simple product semantics} introduced as a well-behaved SAF semantics in \cite{LM11}. 
Recall that the vote aggregation function $\tau_{\epsilon}$ (with $\epsilon > 0$) corresponds to the proportionality of positive votes by the total number of votes on an argument.
Thus, by taking $\epsilon = 0.1$, the social support of all the arguments is: $\tau_{\epsilon}(a) = \tau_{\epsilon}(b) = \tau_{\epsilon}(c) = \tau_{\epsilon}(d) = \frac{1}{1 + 0 + \epsilon} = \frac{1}{1.1} \approx 0.909$.\\
Let us now write the equation system (with one equation for each argument) from the AF illustrated in Figure \ref{figNonUniquenessFourArgs}.
$$
\begin{cases} 
	M(a) = M(c) = \frac{1}{1.1} \times ( 1 - (M(b) + M(d) - M(b) \times M(d))) \\ 
	M(b) = M(d) = \frac{1}{1.1} \times ( 1 - (M(a) + M(c) - M(a) \times M(c))) 
\end{cases}
$$
$$
\begin{cases} 
	M(a) = M(c) = \frac{1}{1.1} \times ( 1 - (2M(b) - M(b)^{2})) = \frac{1}{1.1} \times ( 1 - (2M(d) - M(d)^{2}))  \\ 
	M(b) = M(d) = \frac{1}{1.1} \times ( 1 - (2M(a) - M(a)^{2})) = \frac{1}{1.1} \times ( 1 - (2M(c) - M(c)^{2})) 
\end{cases}
$$

\noindent It can be checked that this equation system has three distinct valid models which lead to different rankings: 
\begin{table}[h]
$$
\begin{array}{c|cccc|c}
&M(a)  &  M(b)   &    M(c)     &   M(d) & ranking  \\
\hline 
\text{model} \ 1  &0.36573 &  0.36573 &  0.36573 &  0.36573 & a \simeq b \simeq c \simeq d  \\
\hline
\text{model} \ 2 &0.01125 & 0.88875 & 0.01125 & 0.88875 & b \simeq d \succ a \simeq c \\
\hline
\text{model}\ 3 &0.88875 & 0.01125 &  0.88875 & 0.01125 & a \simeq c \succ b \simeq d \\
\hline
\end{array} 
$$
\caption{\label{tabDifferentModel} The three distinct valid models from the AF illustrated in Figure \ref{figNonUniquenessFourArgs}}
\end{table}
\end{example}

\section{Discussion}
\label{sec:discussion}

We conclude by briefly discussing the consequences of this result. 
One possible way to ensure the uniqueness of the model would be to normalize the values, for instance by dividing the initial social support $\tau(a)$ of all arguments by the overall number of arguments in the system, thus making sure that $|\attackers{a}| \times \tau(a) < 1$, compute the model (and multiply again). When one is only interested in the rankings of arguments (\emph{i.e.} not in their absolute values), this is a sufficient repair strategy. 

However, this normalization implies that some desirable properties (which were previously satisfied by social abstract argumentation frameworks, at least in the restricted setting of a unique positive vote, see \cite{BDKM16}) would now be falsified. 
One of these properties --satisfied by many of the existing semantics-- is the property of ``Ordinal Independence'' \cite{AB16}. It says that the ranking between two arguments should be independent of arguments that are not connected to either of them. The following example shows that it cannot be guaranteed any longer. 

\begin{example}[Ordinal independence is not satisfied]
We first consider the situation where there are 6 arguments. We shall focus here on the value of some argument in the cycle $(a,b,c)$, say $a$, as compared to the value of argument $f$. 
\begin{center}
\begin{tikzpicture}[scale=1]
\tikzstyle{arg-out}=[circle,draw, line width=1pt]
\tikzstyle{arg-in}=[circle,draw, fill=black!50, line width=1pt]

\node (argA) at (0,3) [arg-out,label=above:\footnotesize{$(1,0)$}, label=below:\footnotesize{}] {$a$};
\node (argB) at (1.75,2) [arg-out,label=right:\footnotesize{$(1,0)$}, label=below:\footnotesize{}] {$b$};
\node (argC) at (0,1) [arg-out,label=below:\footnotesize{$(1,0)$}, label=below:\footnotesize{}] {$c$};
\node (argD) at (4,1) [arg-out,label=below:\footnotesize{$(1,0)$}, label=below:\footnotesize{}] {$d$};
\node (argE) at (4,3) [arg-out,label=above:\footnotesize{$(1,0)$}, label=below:\footnotesize{}] {$e$};
\node (argF) at (5.5,2) [arg-out,label=right:\footnotesize{$(5,0)$}, label=below:\footnotesize{}] {$f$};

\draw [->] (argC) to [bend left=20] (argA);
\draw [->] (argA) to [bend left=20] (argC);
\draw [->] (argB) to [bend left=20] (argA);
\draw [->] (argA) to [bend left=20] (argB);
\draw [->] (argB) to [bend left=20] (argC);
\draw [->] (argC) to [bend left=20] (argB);
\draw [->] (argE) to (argF);
\draw [->] (argD) to (argF);
 
 \end{tikzpicture}
\end{center}
Computing the values (again in using the simple product semantics with $\epsilon =0.1$) gives $M(a)=0.7074$ and $M(f)=0.7058$, hence $a \succ f$. 
But now suppose there are 1000 more --completely unrelated-- arguments. By using the same method, we get that $M(a)=0.9074$, while $M(f)=0.97$, hence $f \succ a$. The addition of these arguments have modified the relative ranking of $a$ and $f$. 

\end{example}

\bibliographystyle{alpha}


\end{document}